\algnewcommand\algorithmicinput{\textbf{Input:}}
\algnewcommand\INPUT{\item[\algorithmicinput]}
\algnewcommand{\algorithmicbreak}{\textbf{break}}
\DeclareMathOperator{\codeif}{\mathtt{:-} }
\tikzset{%
	in place/.style={
		auto=false,
		fill=white,
		inner sep=2pt,
	},
}
\newtheorem{theorem}{Theorem}
\newtheorem{lemma}{Lemma}
\newcommand{\methodname}{ISA}
\newcommand{\methodnamefullhighlight}{\textbf{I}nduction of \textbf{S}ubgoal \textbf{A}utomata for Reinforcement Learning}
\title{Induction of Subgoal Automata for Reinforcement Learning}
\author{Daniel Furelos-Blanco,\textsuperscript{\rm 1} Mark Law,\textsuperscript{\rm 1} Alessandra Russo,\textsuperscript{\rm 1} Krysia Broda,\textsuperscript{\rm 1} Anders Jonsson\textsuperscript{\rm 2}\\ 
\textsuperscript{\rm 1}Imperial College London, United Kingdom,
\textsuperscript{\rm 2}Universitat Pompeu Fabra, Barcelona, Spain\\
\{d.furelos-blanco18, mark.law09, a.russo, k.broda\}@imperial.ac.uk,
anders.jonsson@upf.edu
}
\begin{document}

\maketitle

\begin{abstract}
In this work we present {\methodname}, a novel approach for learning and exploiting subgoals in reinforcement learning (RL). Our method relies on inducing an automaton whose transitions are subgoals expressed as propositional formulas over a set of observable events. A state-of-the-art inductive logic programming system is used to learn the automaton from observation traces perceived by the RL agent. The reinforcement learning and automaton learning processes are interleaved: a new refined automaton is learned whenever the RL agent generates a trace not recognized by the current automaton. We evaluate {\methodname} in several gridworld problems and show that it performs similarly to a method for which automata are given in advance. We also show that the learned automata can be exploited to speed up convergence through reward shaping and transfer learning across multiple tasks. Finally, we analyze the running time and the number of traces that {\methodname} needs to learn an automata, and the impact that the number of observable events has on the learner's performance.
\end{abstract}

\section{Introduction}
Reinforcement learning (RL) \cite{SuttonB98} is a family of algorithms where an agent acts in an environment with the purpose of maximizing the total amount of reward it receives. These algorithms have played a key role in major breakthroughs like human-level video game playing \cite{MnihKSRVBGRFOPB15} or mastering complex board games \cite{Silver18}. However, current methods lack the ability to generalize and transfer between tasks. For example, they cannot learn to play chess using the knowledge of sh$\bar{\text{o}}$gi.

Advances in achieving generalization and transfer in RL are mainly due to abstractions \cite{Ho2019,Konidaris2019}. Hierarchical reinforcement learning methods, which divide a single task into several subtasks that can be solved separately, are among the most promising approaches to abstracting RL tasks. Common frameworks for hierarchical RL are HAMs \cite{ParrR97}, options \cite{SuttonPS99} and MAXQ \cite{Dietterich00}.

Abstract hierarchies can be naturally represented using automata, which have been used effectively in task abstraction, not only in RL \cite{ParrR97,IcarteKVM18}, but also in related areas like automated planning \cite{BonetPG09,HuD11,SegoviaJJ18}. However, these RL approaches use handcrafted automata instead of learning them from data. It is largely an open problem in RL how to autonomously decompose a task into an automaton that can be exploited during the RL process.

In this paper, we address this problem and propose {\methodname} (\methodnamefullhighlight), a method for learning and exploiting a minimal automaton from observation traces perceived by an RL agent. These automata are called \emph{subgoal automata} since each transition is labeled with a subgoal, which is a boolean formula over a set of observables. Subgoal automata can be expressed in a logic programming format  and, thus, be learned using state-of-the-art inductive logic programming (ILP) systems \cite{ILASP_system}. The resulting subgoal automaton can be exploited by an RL algorithm that learns a different policy for each automaton state, each aiming to achieve a subgoal. This decomposition allows learning multiple subpolicies simultaneously and transferring knowledge across multiple tasks. {\methodname} interleaves the RL and automaton learning processes such that the agent can immediately leverage the new (partially correct) learned subgoal automaton: when a trace is not correctly recognized by the automaton, a new one is learned.

We evaluate {\methodname} in several gridworld problems and show that it learns subgoal automata and policies for each of these. Specifically, it performs similarly to a method for which automata are given in advance. Furthermore, the learned automata can be exploited to speed up convergence through reward shaping and transfer learning.

The paper is organized as follows. Section \ref{sec:background} introduces the background of our work. Section \ref{sec:methodology} formally presents the problem we address and our method for solving it, while Section \ref{sec:results} describes the results. We discuss related work in Section \ref{sec:related_work} and conclude in Section \ref{sec:conclusions}.

\section{Background}
\label{sec:background}
In this section we briefly summarize the key background concepts used throughout the paper: reinforcement learning and inductive learning of answer set programs.

\subsection{Reinforcement Learning}
Reinforcement learning (RL) \cite{SuttonB98} is a family of algorithms for learning to act in an unknown environment. Typically, this learning process is formulated as a \emph{Markov Decision Process (MDP)}, i.e., a tuple $\mathcal{M}=\langle S,A,p,r,\gamma\rangle$, where $S$ is a finite set of states, $A$ is a finite set of actions, $p:S\times A \to \Delta(S)$ is a transition probability function\footnote{For any finite set $X$, $\Delta(X) = \{\mu \in \mathbb{R}^X : \sum_x\mu(x) = 1, \mu(x) \geq 0~(\forall x) \}$ is the probability simplex over $X$. }, $r:S \times A \times S \to \mathbb{R}$ is a reward function, and $\gamma \in [0,1)$ is a discount factor. At time $t$, the agent observes state $s_t \in S$, executes action $a_t \in A$, transitions to the next state $s_{t+1} \sim p(\cdot|s_t, a_t)$ and receives reward $r(s_t, a_t, s_{t+1})$.

We consider {\em episodic} MDPs that \emph{terminate} in a given set of terminal states. We distinguish between goal states and undesirable terminal states (i.e.,~dead-ends). Let $S_T\subseteq S$ be the set of terminal states and $S_G\subseteq S_T$ the set of goal states. The aim is to find a \emph{policy} $\pi:S \to \Delta(A)$, a mapping from states to probability distributions over actions, that maximizes the expected sum of discounted reward (or \emph{return}), $R_t=\mathbb{E}[\sum_{k=t}^n\gamma^{k-t} r_k]$, where $n$ is the last episode step.

In model-free RL the transition probability function $p$ and reward function $r$ are unknown to the agent, and a policy is learned via interaction with the environment. Q-learning \cite{Watkins:1989} computes an \emph{action-value function} $Q^\pi(s,a) = \mathbb{E}[R_t|s_t=s, a_t=a]$ that estimates the return from each state-action pair when following an approximately optimal policy. In each iteration the estimates are updated as $$
Q\left(s,a\right) \xleftarrow{\alpha} r\left(s,a,s'\right) + \gamma\max_{a'}Q\left(s',a'\right),
$$ where $x\xleftarrow{\alpha}y$ is shorthand for $x \leftarrow x+ \alpha \left(y-x\right)$, $\alpha$ is a learning rate and $s'$ is the state after applying $a$ in $s$. Usually, an $\epsilon$-greedy policy selects a random action with probability $\epsilon$ and the action maximizing $Q(s,a)$ otherwise. The policy is induced by the action that maximizes $Q(s,a)$ in each $s$.

\subsubsection{Options}\cite{SuttonPS99} address temporal abstraction in RL. Given an MDP $\mathcal{M}=\langle S,A,p,r,\gamma\rangle$, an \emph{option}  is a tuple $\omega=\langle I_\omega, \pi_\omega, \beta_\omega\rangle$ where $I_\omega \subseteq S$ is the option's initiation set, $\pi_\omega:S \to \Delta(A)$ is the option's policy, and $\beta_\omega:S\to [0,1]$ is the option's termination condition. An option is available in state $s \in S$ if $s\in I_\omega$. If the option is started, the actions are chosen according to $\pi_\omega$. The option terminates at a given state $s\in S$ with probability $\beta_\omega(s)$. The action set of an MDP can be augmented with options, which can be either handcrafted or automatically discovered. An MDP extended with options is a Semi-Markov Decision Process (SMDP); the learning methods for MDPs still apply to SMDPs. To improve sample-efficiency, the experiences $(s,a,r,s')$ generated by an option's policy can be used to update other options' policies. This transfer learning method is called \emph{intra-option learning} \cite{SuttonPS98}.

\subsection{Inductive Learning of Answer Set Programs}
In this section we describe answer set programming (ASP) and the ILASP system for learning ASP programs.

\subsubsection{Answer Set Programming (ASP)} \cite{gelfond_kahl_2014} is a declarative programming language for knowledge representation and reasoning. An ASP problem is expressed in a logical format and the models (called answer sets) of its representation provide the solutions to that problem.

A \emph{literal} is an atom $\mathtt{a}$ or its negation $\mathtt{not~a}$. Given an atom $\mathtt{h}$ and a set of literals $\mathtt{b_1,\ldots,b_n}$, a \emph{normal rule} is of the form $\mathtt{h \codeif b_1, \ldots, b_n}$, where $\mathtt{h}$ is the \emph{head} and $\mathtt{b_1,\ldots,b_n}$ is the \emph{body} of the rule. Rules of the form $\mathtt{\codeif b_1, \ldots, b_n}$ are called \emph{constraints}. In this paper, we assume an ASP program $P$ to be a set of normal rules and constraints.
Given a set of ground atoms (or \emph{interpretation}) $I$, a ground normal rule is satisfied if the head is satisfied by $I$ when the body literals are satisfied by $I$. A ground constraint is satisfied if the body is not satisfied by $I$. The \emph{reduct} $P^I$ of a program $P$ with respect to $I$ is built by removing all rules including $\mathtt{not~a}$ such that $\mathtt{a}\in I$ from $P$. $I$ is an \emph{answer set} of $P$ iff (1) $I$ satisfies the rules of $P^I$, and (2) no subset of $I$ satisfies the rules of $P^I$.

\subsubsection{ILASP} \cite{ILASP_system} is a system for learning ASP programs from  partial answer sets. A \emph{context-dependent partial interpretation (CDPI)} \cite{LawRB16} is a pair $\langle \langle e^{inc}, e^{exc} \rangle, e^{ctx} \rangle$, where $\langle e^{inc}, e^{exc}\rangle $ is a partial interpretation and $e^{ctx}$ is an ASP program, called \emph{context}. A program $P$ accepts $e$ iff there is an answer set $A$ of $P \cup e^{ctx}$ such that $e^{inc} \subseteq A$ and $e^{exc}~\cap A = \emptyset$. An \emph{ILASP task} \cite{LawRB16} is a tuple $T=\langle B,S_M,E\rangle$ where $B$ is the ASP background knowledge, $S_M$ is the set of rules allowed in the hypotheses, and $E$ is a set of CDPIs, called examples. A hypothesis $H \subseteq S_M$ is a \emph{solution} of $T$ iff $\forall e \in E$, $B \cup H$ accepts $e$.

\section{Methodology}
\label{sec:methodology}

In this section we describe {\methodname}, our method that interleaves the learning of subgoal automata with the learning of policies for achieving these subgoals. The tasks we consider are \emph{episodic MDPs} $\mathcal{M}=\langle S,A,p,r,\gamma, S_T, S_G\rangle$ where $r(s,a,s')=1$ if $s'\in S_G$ and 0 otherwise.

The automaton transitions are defined by a logical formula over a set of \emph{observables} $\mathcal{O}$. A \emph{labeling function} $L:S \to 2^\mathcal{O}$ maps an MDP state into a subset of observables $O \subseteq \mathcal{O}$ (or \emph{observations}) perceived by the agent at that state.

\begin{figure*}[ht]
	\subfloat[\label{fig:officeworld_grid}]{%
		\tikzset{digit/.style = { minimum height = 5mm, minimum width=5mm, anchor=center }}
		\newcommand{\setcell}[3]{\edef\x{#2 - 0.5}\edef\y{9.5 - #1}\node[digit,name={#1-#2}] at (\x, \y) {#3};}
		\centering
		\begin{tikzpicture}[scale=0.47]
		\draw[white] (0, 0) grid (13, 10);
		
		\setcell{9}{2}{0} \setcell{9}{3}{1} \setcell{9}{4}{2} \setcell{9}{5}{3} \setcell{9}{6}{4} \setcell{9}{7}{5} \setcell{9}{8}{6} \setcell{9}{9}{7} \setcell{9}{10}{8} \setcell{9}{11}{9} \setcell{9}{12}{10} \setcell{9}{13}{11}
		
		\setcell{8}{1}{0} \setcell{7}{1}{1} \setcell{6}{1}{2} \setcell{5}{1}{3} \setcell{4}{1}{4} \setcell{3}{1}{5} \setcell{2}{1}{6} \setcell{1}{1}{7} \setcell{0}{1}{8}
		
		\draw[gray] (1, 1) grid (13, 10);
		\draw[very thick, scale=3] (1/3, 1/3) rectangle (13/3, 10/3);
		
		\draw[very thick, scale=1] (4, 1) rectangle (4, 2); \draw[very thick, scale=1] (4, 9) rectangle (4, 10);
		\draw[very thick, scale=1] (4, 3) rectangle (4, 8);
		\draw[very thick, scale=1] (7, 1) rectangle (7, 2); \draw[very thick, scale=1] (7, 3) rectangle (7, 8); \draw[very thick, scale=1] (7, 9) rectangle (7, 10);
		\draw[very thick, scale=1] (10, 1) rectangle (10, 2); \draw[very thick, scale=1] (10, 3) rectangle (10, 8); \draw[very thick, scale=1] (10, 9) rectangle (10, 10);
		\draw[very thick, scale=1] (1, 4) rectangle (2, 4); \draw[very thick, scale=1] (3, 4) rectangle (11, 4); \draw[very thick, scale=1] (12, 4) rectangle (13, 4);
		\draw[very thick, scale=1] (1, 7) rectangle (2, 7); \draw[very thick, scale=1] (3, 7) rectangle (5, 7); \draw[very thick, scale=1] (6, 7) rectangle (8, 7); \draw[very thick, scale=1] (9, 7) rectangle (11, 7); \draw[very thick, scale=1] (12, 7) rectangle (13, 7);
		\setcell{1}{3}{$D$} \setcell{1}{6}{$\ast$} \setcell{1}{9}{$\ast$} \setcell{1}{12}{$C$}
		\setcell{2}{5}{\Coffeecup} \setcell{2}{6}{\Strichmaxerl[1.25]}
		\setcell{4}{3}{$\ast$} \setcell{4}{6}{$o$} \setcell{4}{9}{\Letter} \setcell{4}{12}{$\ast$}
		\setcell{7}{3}{$A$} \setcell{7}{6}{$\ast$} \setcell{7}{9}{$\ast$} \setcell{7}{12}{$B$}
		\end{tikzpicture}
	}
	\hfill
	\subfloat[\label{fig:officeworld_coffee_traces}]{%
		\shortstack[l]{
			Execution trace\\
			$\begin{aligned}
			T = \langle& s_{4,6}, \leftarrow, 0, s_{3,6}, \rightarrow, 0,\\ &s_{4,6},\downarrow, 0, s_{4,5}, \downarrow, 1, s_{4,4} \rangle
			\end{aligned}$\\~\\~\\
			Observation trace\\
			$\begin{aligned}
			T_{L,\mathcal{O}} = \left\langle \{\}, \{\text{\Coffeecup}\}, \{\}, \{\}, \{o\} \right\rangle
			\end{aligned}$\\~\\~\\
			Compressed observation trace\\
			$\begin{aligned}
			\hat{T}_{L,\mathcal{O}} = \left\langle \{\}, \{\text{\Coffeecup}\}, \{\}, \{o\} \right\rangle
			\end{aligned}$
		}
	}
	\hfill
	\subfloat[\label{fig:officeworld_coffee_rm}]{%
		\centering
		\begin{tikzpicture}[shorten >=1pt,node distance=2.06cm,on grid,auto]
		\node[state,initial] (u_0)   {$u_0$};
		\node[state,accepting] (u_acc) [below =of u_0]  {$u_{A}$};
		\node[state] (u_1) [left =of u_acc]   {$u_1$};
		\node[state] (u_rej) [right =of u_acc]  {$u_{R}$};
		
		\path[->] (u_0) edge [loop above] node {otherwise} ();
		\path[->] (u_1) edge [loop below] node {otherwise} ();
		
		\path[->] (u_0) edge [bend right] node [swap] {$\texttt{\Coffeecup} \land \neg o$} (u_1);
		\path[->] (u_0) edge [bend left] node {$\ast$} (u_rej);
		\path[->] (u_1) edge[bend right, below] node {$\ast$} (u_rej);
		\path[->] (u_1) edge[bend left, above] node [swap] {$o$} (u_acc);
		\path[->] (u_0) edge node {$\texttt{\Coffeecup} \land o$} (u_acc);
		\end{tikzpicture}
	}
	
	\caption{The \textsc{OfficeWorld} environment \cite{IcarteKVM18}. Figure \protect\subref{fig:officeworld_grid} is an example grid, \protect\subref{fig:officeworld_coffee_traces} shows a positive execution trace in the example grid for the \textsc{Coffee} task and its derived observation traces, and \protect\subref{fig:officeworld_coffee_rm} shows the \textsc{Coffee} automaton.}
	\label{fig:officeworld_and_rm}
\end{figure*}

We use the \textsc{OfficeWorld} environment \cite{IcarteKVM18} to explain our method. It consists of a grid (see Figure~\ref{fig:officeworld_grid}) where an agent ($\Strichmaxerl[1.25]$) can move in the four cardinal directions, and the set of observables is $\mathcal{O}=\lbrace\texttt{\Coffeecup}, \texttt{\Letter}, o, A, B,C,D,\ast\rbrace$. The agent picks up coffee and the mail at locations \Coffeecup~and \Letter~respectively, and delivers them to the office at location $o$. The decorations $\ast$ are broken if the agent steps on them. There are also four locations labeled $A$, $B$, $C$ and $D$. The observables $A$, $B$, $C$ and $D$, and decorations $\ast$ do not share locations with other elements. The agent observes these labels when it steps on their locations. Three tasks with different goals are defined on this environment: \textsc{Coffee} (deliver coffee to the office), \textsc{CoffeeMail} (deliver coffee and mail to the office), and \textsc{VisitABCD} (visit $A$, $B$, $C$ and $D$ in order). The tasks terminate when the goal is achieved or a $\ast$ is broken (this is a dead-end state).

\subsection{Traces}
An \emph{execution trace} $T$ is a finite state-action-reward sequence $T=\langle s_0,a_0,r_1,s_1,a_1, \ldots, a_{n-1},r_n, s_n\rangle$ induced by a (changing) policy during an episode. An execution trace is \emph{positive} if $s_n \in S_G$, \emph{negative} if $s_n \in S_T \setminus S_G$, and \emph{incomplete} if $s_n \notin S_T$, denoted by $T^+$, $T^-$ and $T^I$ respectively.

An \emph{observation trace} $T_{L,\mathcal{O}}$ is a sequence of observation sets $O_i\subseteq \mathcal{O}, 0 \leq i \leq n$, obtained by applying a labeling function $L$ to each state $s_i$ in an execution trace $T$. A \emph{compressed observation trace} $\hat{T}_{L,\mathcal{O}}$ is the result of removing contiguous duplicated observation sets from $T_{L, \mathcal{O}}$. Figure~\ref{fig:officeworld_coffee_traces} shows an example of a positive execution trace for the \textsc{Coffee} task together with the derived observation trace and the resulting compressed trace.

A set of execution traces is a tuple $\mathcal{T} = \langle\mathcal{T}^+, \mathcal{T}^-, \mathcal{T}^I \rangle$, where $\mathcal{T}^+$, $\mathcal{T}^-$ and $\mathcal{T}^I$ are sets of positive, negative and incomplete traces, respectively. The sets of observation and compressed observation traces are denoted $\mathcal{T}_{L,\mathcal{O}}$ and $\hat{\mathcal{T}}_{L,\mathcal{O}}$.

\subsection{Subgoal Automata}
A \emph{subgoal automaton} is a tuple $\mathcal{A}=\langle U, \mathcal{O}, \delta, u_0, u_A, u_R \rangle$ where $U$ is a finite set of states, $\mathcal{O}$ is a set of observables (or alphabet), $\delta: U \times 2^\mathcal{O} \to U$ is a deterministic transition function that takes as arguments a state and a subset of observables and returns a state, $u_0 \in U$ is a start state, $u_A \in U$ is the unique accepting state, and $u_R \in U$ is the unique rejecting state.

A subgoal automaton $\mathcal{A}$ \emph{accepts} an observation trace $T_{L,\mathcal{O}}=\langle O_0, \ldots, O_n\rangle$ if there exists a sequence of automaton states $u_0, \ldots, u_{n+1}$ in $U$ such that (1) $\delta(u_i, O_i)=u_{i+1}$ for $i=0,\ldots,n$, and (2) $u_{n+1} = u_A$. Analogously, $\mathcal{A}$ \emph{rejects} $T_{L,\mathcal{O}}$ if $u_{n+1} = u_R$.

When a subgoal automaton is used together with an MDP, the actual states are $(s,u)$ pairs where $s\in S$ and $u\in U$. Therefore, actions are selected according to a policy $\pi:S \times U \to \Delta(A)$ where $\pi(a|s,u)$ is the probability for taking action $a\in A$ at the MDP state $s\in S$ and the automaton state $u\in U$. At each step, the agent transitions to $(s',u')$ where $s'$ is the result of applying an action $a \in A$ in $s$, while $u'=\delta(u, L(s'))$. Then, the agent receives reward 1 if $u'=u_A$ and 0 otherwise.

Figure~\ref{fig:officeworld_coffee_rm} shows the automaton for the \textsc{Coffee} task of the \textsc{OfficeWorld} domain. Each transition is labeled with a logical condition $\varphi \in 3^\mathcal{O}$ that expresses the subgoal to be achieved\footnote{Note that $\varphi \in 3^\mathcal{O}$ because each observable can appear as positive or negative, or not appear in the condition.}. The sequence of visited automaton states for the trace in Figure~\ref{fig:officeworld_coffee_traces} would be $\langle u_0, u_1, u_1, u_1, u_A \rangle$.

\subsubsection{Relationship with options.}
Each state $u \in U$ in a subgoal automaton encapsulates an option $\omega_u=\langle I_{u}, \pi_{u}, \beta_{u} \rangle$ whose policy $\pi_{u}$ attempts to satisfy an outgoing transition's condition. Formally, the option termination condition $\beta_{u}$ is:
\begin{align*}
\beta_{u}(s) = \left\{\begin{matrix}
1~\text{if}~\exists u' \neq u, \varphi \in 3^\mathcal{O} \mathord{\mid} L(s') \models \varphi, \delta(u, \varphi) = u' \\
0~\text{otherwise}
\end{matrix}\right..
\end{align*}
That is, the option at automaton state $u \in U$ finishes at MDP state $s \in S$ if there is a transition to a different automaton state $u'\in U$ such that the transition's condition $\varphi$ is satisfied by the next observations $L(s')$. Note that at most one transition can be made true since the automaton is deterministic.

The initiation set $I_u$ is formed by all those states that satisfy the incoming conditions:
\begin{align*}
I_{u}=\left\lbrace s\in S \mid \delta\left(u', \varphi\right)=u, L\left(s\right) \models \varphi, u\neq u', u \neq u_0 \right\rbrace.
\end{align*}
In the particular case of the initial automaton state $u_0\in U$, its initiation set $I_{{u_0}}=S$ is the whole state space since there is no restriction imposed by any previous automaton state.

Note that we do not add options to the set of primitive actions, which would make the decision process more complex since more alternatives are available. Instead, subgoal automata keep the action set unchanged, and which option to apply is determined by the current automaton state.

\subsection{Learning Subgoal Automata from Traces}
This section describes our approach for learning an automaton. We formalize the automaton learning task as a tuple $T_{\mathcal{A}}=\langle U, \mathcal{O}, \mathcal{T}_{L,\mathcal{O}} \rangle $, where $U \supseteq \lbrace u_0, u_A, u_R\rbrace$ is a set of states, $\mathcal{O}$ is a set of observables, and $\mathcal{T}_{L,\mathcal{O}}=\langle \mathcal{T}^+_{L,\mathcal{O}},\mathcal{T}^-_{L,\mathcal{O}},\mathcal{T}^I_{L,\mathcal{O}}\rangle$ is a set of (possibly compressed) observation traces (abbreviated as \emph{traces} below). An automaton $\mathcal{A}$ is a solution of $T_\mathcal{A}$ if and only if accepts all positive traces $\mathcal{T}_{L,\mathcal{O}}^+$, rejects all negative traces $\mathcal{T}_{L,\mathcal{O}}^-$, and neither accepts nor rejects incomplete traces $\mathcal{T}_{L,\mathcal{O}}^I$.

The automaton learning task $T_{\mathcal{A}}$ is mapped into an ILASP task $M(T_{\mathcal{A}})=\langle B,S_M,E\rangle$. Then, the ILASP system is used to learn the smallest set of transitions (i.e.,~a minimal hypothesis) that covers the example traces. 

We define the components of the ILASP task $M(T_{\mathcal{A}})$ below. An ILASP task specifies the maximum size of a rule body. To allow for an arbitrary number of literals in the bodies, we learn the negation $\bar{\delta}$ of the actual transitions $\delta$. Thus, we do not limit the number of conjuncts, but the number of disjuncts (i.e., the number of edges between two states). We denote the maximum number of disjuncts by $\max|\delta(x,y)|$.	

\subsubsection{Hypothesis space.} The hypothesis space $S_M$ is formed by two kinds of rules:
\begin{enumerate}
	\item Facts of the form $\mathtt{ed(X, Y, E).}$ indicating there is a transition from state $\mathtt{X}\in U \setminus \lbrace u_A, u_R\rbrace$ to state $\mathtt{Y} \in U\setminus\lbrace \mathtt{X} \rbrace$ using edge $\mathtt{E} \in [1, \max|\delta(x,y)|]$.
	\item Normal rules whose \emph{head} is of the form $\mathtt{\bar{\delta}(X,Y,E,T)}$ stating that the conditions of the transition from state $\mathtt{X}\in U \setminus \lbrace u_A, u_R\rbrace$ to state $\mathtt{Y} \in U \setminus \lbrace \mathtt{X} \rbrace$ in edge $\mathtt{E} \in [1, \max|\delta(x,y)|]$ \emph{do not} hold at time $\mathtt{T}$. These conditions are specified in the \emph{body}, which is a conjunction of $\mathtt{obs(O,T)}$ literals indicating that observable $\mathtt{O}\in\mathcal{O}$ is seen at time $\mathtt{T}$. The atom $\mathtt{step(T)}$ expresses that $\mathtt{T}$ is a timestep. The body must contain at least one $\mathtt{obs}$ literal.
\end{enumerate}

Note that the hypothesis space does not include (i) loop transitions, and (ii) transitions from $u_A$ and $u_R$. Later, we define (i) in the absence of external transitions.

Given a subgoal automaton $\mathcal{A}$, we denote the set of ASP rules that describe it by $M(\mathcal{A})$. The rules below correspond to the $(u_0, u_1)$ and $(u_0, u_A)$ transitions in Figure \ref{fig:officeworld_coffee_rm}:

{\small\noindent
	\begin{tabular}{l}
		$\mathtt{ed(u_0, u_1, 1).~ed(u_0, u_A, 1).}$\\
		$\mathtt{\bar{\delta}(u_0, u_1, 1, T) \codeif not~obs(\text{\Coffeecup}, T), step(T).}$\\
		$\mathtt{\bar{\delta}(u_0, u_1, 1, T) \codeif obs(}o\mathtt{, T), step(T).}$\\
		$\mathtt{\bar{\delta}(u_0, u_A, 1, T) \codeif not~obs(}o\mathtt{, T), step(T).}$\\
		$\mathtt{\bar{\delta}(u_0, u_A, 1, T) \codeif not~obs(\text{\Coffeecup}, T), step(T).}$
	\end{tabular}
}

\subsubsection{Examples.} Given $\mathcal{T}_{L,\mathcal{O}}=\langle \mathcal{T}^+_{L,\mathcal{O}},\mathcal{T}^-_{L,\mathcal{O}},\mathcal{T}^I_{L,\mathcal{O}}\rangle$, the example set is defined as $E=\lbrace\langle e^*, C_{T_{L,\mathcal{O}}} \rangle \mid \ast \in \lbrace +,-,I \rbrace,$ $T_{L,\mathcal{O}} \in \mathcal{T}^*_{L,\mathcal{O}} \rbrace$, where $e^+=\langle \lbrace\mathtt{accept}\rbrace,\lbrace\mathtt{reject}\rbrace \rangle$, $e^-=\langle \lbrace\mathtt{reject}\rbrace, \lbrace\mathtt{accept}\rbrace \rangle$ and $e^I=\langle \lbrace\rbrace,\lbrace\mathtt{accept, reject}\rbrace \rangle$ are the partial interpretations for positive, negative and incomplete examples. The $\mathtt{accept}$ and  $\mathtt{reject}$ atoms express whether a trace is accepted or rejected by the automaton; hence, positive traces must only be accepted, negative traces must only be rejected, and incomplete traces cannot be accepted or rejected.

Given a trace $T_{L,\mathcal{O}}=\langle O_0,\ldots,O_n\rangle$, a context is defined as: $C_{T_{L,\mathcal{O}}} = \left\lbrace \mathtt{obs(O, T).} \mid \mathtt{O} \in O_{\mathtt{T}}, O_\mathtt{T} \in T_{L,\mathcal{O}}\right\rbrace \cup\left\lbrace\mathtt{ last(}n\mathtt{).}\right\rbrace$, where $\mathtt{last(}n\mathtt{)}$ indicates that the trace ends at time $n$. We denote by $M(T_{L,\mathcal{O}})$ the set of ASP facts that describe the trace $T_{L,\mathcal{O}}$. For example, $M(T_{L,\mathcal{O}})=\lbrace \mathtt{obs(a,0).}~\mathtt{obs(b,2).}~\mathtt{obs(c,2).}~\mathtt{last(2).}\rbrace$ for the trace $T_{L,\mathcal{O}}=\langle\lbrace a\rbrace,\lbrace\rbrace,\lbrace b,c\rbrace\rangle$.

\subsubsection{Background knowledge.} 
The next paragraphs describe the background knowledge $B$ components: $B = B_U \cup B_{\mathtt{step}} \cup B_\delta \cup B_\mathtt{st}$. First, $B_U$ is a set of facts of the form $\mathtt{state(u).}$ for each $\mathtt{u} \in U$. The set $B_\mathtt{step}$ defines a fluent $\mathtt{step(T)}$ for $0 \leq \mathtt{T} \leq \mathtt{T'} + 1$ where $\mathtt{T'}$ is the step for which $\mathtt{last(T')}$ is defined.

The subset $B_\delta$ defines rules for the automaton transition function. The first rule defines all the possible edge identifiers, which is limited by the maximum number of edges between two states. The second rule states that there is an external transition from state $\mathtt{X}$ at time $\mathtt{T}$ if there is a transition from $\mathtt{X}$ to a different state $\mathtt{Y}$ at that time. The third rule is a frame axiom: state $\mathtt{X}$ transitions to itself at time $\mathtt{T}$ if there are no external transitions from it at that time. The fourth rule defines the positive transitions in terms of the learned negative transitions $\bar{\delta}$ and $\mathtt{ed}$ atoms. The fifth rule preserves determinism: two transitions from $\mathtt{X}$ to two different states $\mathtt{Y}$ and $\mathtt{Z}$ cannot hold at the same time. The sixth rule forces all non-terminal states to have an edge to another state.
\begin{equation*}
B_\delta = {\small\begin{Bmatrix*}[l]
	\mathtt{edge\_id(1..\max |\delta(x,y)|)}.\\
	\mathtt{ext\_\delta(X, T)\codeif\delta(X, Y, \_, T), X !\mathord{=} Y.}\\
	\mathtt{\delta(X, X, 1, T) \codeif not~ext\_\delta(X, T), state(X), step(T).}\\
	\mathtt{\delta(X,Y,E,T) \codeif ed(X,Y,E), not~\bar{\delta}(X,Y,E,T), step(T).}\\
	\mathtt{\codeif \delta(X, Y, \_, T), \delta(X, Z, \_, T), Y !\mathord{=} Z.}\\
	\mathtt{\codeif not~ed(X,\_,\_), state(X), X!\mathord{=} u_A, X!\mathord{=} u_R.}
	\end{Bmatrix*}}
\end{equation*}

The subset $B_\mathtt{st}$ uses $\mathtt{st(T,X)}$ atoms, indicating that the agent is in state $\mathtt{X}$ at time $\mathtt{T}$. The first rule says that the agent is in $\mathtt{u_0}$ at time $\mathtt{0}$. The second rule determines that at time $\mathtt{T\mathord{+}1}$ the agent will be in state $\mathtt{Y}$ if it is in a non-terminal state $\mathtt{X}$ at time $\mathtt{T}$ and a transition between them holds. The third (resp.\ fourth) rule defines that the example is accepted (resp.\ rejected) if the state at the trace's last timestep is $\mathtt{u_A}$ (resp.\ $\mathtt{u_R}$).
\begin{equation*}
B_\mathtt{st} = {\small\begin{Bmatrix*}[l]
	\mathtt{st(0, u_0).}\\
	\mathtt{st(T\mathord{+}1,Y) \codeif st(T,X),\delta(X,Y,\_,T), X!\mathord{=} u_A, X!\mathord{=} u_R.}\\
	\mathtt{accept \codeif last(T), st(T\mathord{+}1, u_A).}\\
	\mathtt{reject \codeif last(T), st(T\mathord{+}1, u_R).}
	\end{Bmatrix*}}
\end{equation*}

Lemma \ref{lemma:asp_correctness} and Theorem \ref{theorem:first_theorem} capture the correctness of the encoding. We omit the proofs for brevity.

\begin{lemma}[Correctness of the ASP encoding]
	Given an automaton $\mathcal{A}$ and a finite trace $T_{L,\mathcal{O}}^\ast$, where $\ast \in \lbrace +, -, I\rbrace$, $M(\mathcal{A}) \cup B \cup M(T_{L,\mathcal{O}}^\ast)$ has a unique answer set $S$ and (1) $\mathtt{accept}\in S$ iff $\ast = +$, and (2) $\mathtt{reject} \in S$ iff $\ast = -$.
	\label{lemma:asp_correctness}
\end{lemma}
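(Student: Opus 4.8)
The plan is to prove the lemma in two stages: first show that $M(\mathcal{A}) \cup B \cup M(T_{L,\mathcal{O}}^\ast)$ is (locally) stratified, so that it has a unique answer set $S$ computable layer by layer; then prove by induction on time that the $\mathtt{st}$ atoms of $S$ replay exactly the run of $\mathcal{A}$ on the trace, so that $\mathtt{accept}$ (resp.\ $\mathtt{reject}$) is derived precisely when $\mathcal{A}$ accepts (resp.\ rejects). To set up the stratification I would isolate the layers of the ground program. The atoms $\mathtt{obs}$, $\mathtt{ed}$, $\mathtt{state}$ and $\mathtt{last}$ are given as facts and $\mathtt{step}$ is derived from $\mathtt{last}$ without negation; take these as the base layer. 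The $\bar{\delta}$ atoms depend only on the fixed $\mathtt{obs}$ facts (through negation-as-failure) and on $\mathtt{ed}$, so they are fully determined once the base is fixed. The external transitions $\delta(X,Y,\_,T)$ with $Y \neq X$ depend only on $\mathtt{ed}$, $\bar{\delta}$ and $\mathtt{step}$; $\mathtt{ext\_\delta}$ depends positively on those external transitions; and the frame-axiom self-loops $\delta(X,X,1,T)$ depend negatively only on $\mathtt{ext\_\delta}$. Since no ground atom depends negatively on itself through a cycle---the self-loop $\delta(X,X,1,T)$ rests on $\mathtt{ext\_\delta}(X,T)$, which by its rule is derived only from external transitions $\delta(X,Y,\_,T)$ with $Y\neq X$---the program minus constraints is locally stratified, so it admits a unique answer set obtained bottom-up; I would make this precise with the splitting-set theorem.

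Within the transition layers I would establish the key correspondence: for a fixed state $X$ and time $t$, $\delta(X,Y,E,t)$ holds in the candidate model iff $O_t \models \varphi_{X,Y,E}$, where $\varphi_{X,Y,E}$ is the condition of edge $E$ from $X$ to $Y$. This holds because $\bar{\delta}(X,Y,E,t)$ fails exactly when none of the learned body rules fire, i.e.\ when $O_t$ satisfies the conjunction encoded by edge $E$, and then the $\mathtt{ed}$ fact yields $\delta(X,Y,E,t)$. The determinism constraint is satisfied because $\mathcal{A}$ is deterministic (at most one outgoing condition holds at $X$), and the self-loop fires exactly when no external condition holds; hence every non-terminal $X$ has exactly one successor at time $t$. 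The outgoing-edge constraint holds because $M(\mathcal{A})$ gives every non-terminal state an edge. Therefore the candidate model violates no constraint and is the unique answer set $S$.

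With $\delta$ pinned down I would prove by induction on $t$ that $\mathtt{st}(t,u) \in S$ iff $u$ is the state of $\mathcal{A}$ after reading $O_0,\dots,O_{t-1}$ from $u_0$, valid up to and including the first time a terminal state is entered, after which the guard $X \neq u_A, X \neq u_R$ halts propagation. The base case $\mathtt{st}(0,u_0)$ is immediate, and the inductive step uses the transition correspondence together with the uniqueness of the successor. Finally, since the episode---and therefore the trace---terminates exactly when $\mathcal{A}$ first reaches $u_A$ or $u_R$, any terminal state is entered precisely at step $n{+}1$; thus $\mathtt{st}(n{+}1,u_A) \in S$ iff $\mathcal{A}$ accepts and $\mathtt{st}(n{+}1,u_R) \in S$ iff $\mathcal{A}$ rejects, so the rules for $\mathtt{accept}$ and $\mathtt{reject}$ give (1) and (2) once we recall that positive traces end in goal states (accepted by $\mathcal{A}$), negative traces in dead-ends (rejected), and incomplete traces in non-terminal states (neither), which is exactly the meaning of $\ast = +$, $\ast = -$ and $\ast = I$.

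The main obstacle, I expect, is the uniqueness claim: one must argue carefully that the negation in the frame axiom and in the $\bar{\delta}$ rules never closes a negative cycle at the ground level, and that all three constraints---determinism, totality of the transition relation, and the outgoing-edge requirement---hold, so that the single stratified candidate model indeed survives as an answer set rather than being eliminated. The off-by-one handling of terminal states is the other delicate point: one must verify that propagation stops at $u_A$ and $u_R$ rather than looping spuriously, and that $\mathtt{accept}$/$\mathtt{reject}$ fire at step $n{+}1$, which relies on the semantic fact that the trace ends exactly when the automaton first reaches a terminal state.
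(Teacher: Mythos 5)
The paper gives you nothing to compare against here: immediately after stating Lemma~\ref{lemma:asp_correctness} it says ``We omit the proofs for brevity,'' so your attempt can only be judged on its own merits. On those merits, the ASP-technical core of your argument is sound and almost certainly the intended one: you correctly observe that the only apparent negative cycle, $\delta \rightarrow \mathtt{ext\_\delta} \rightarrow \delta$, is broken at the ground level (the rule for $\mathtt{ext\_\delta}$ only consumes external transitions with $\mathtt{Y} \neq \mathtt{X}$, while the frame axiom only produces self-loops), so the program minus constraints is locally stratified and splitting yields a unique candidate model; the correspondence ``$\delta(\mathtt{X},\mathtt{Y},\mathtt{E},t)$ holds iff $O_t \models \varphi_{X,Y,E}$'' follows from the contrapositive $\bar{\delta}$ encoding; and the induction on $\mathtt{st}$, with propagation halting at $u_A$ and $u_R$, is the right way to finish.

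The genuine gap is in your final bridging step, where you pass from $\mathtt{st}(n{+}1,u_A) \in S$ to $\ast = +$ by invoking ``the semantic fact that the trace ends exactly when the automaton first reaches a terminal state.'' That is not a fact available in the lemma: the statement quantifies over an \emph{arbitrary} automaton $\mathcal{A}$ and an arbitrary labelled trace, and for arbitrary $\mathcal{A}$ the claim ``$\mathtt{accept} \in S$ iff $\ast = +$'' is simply false (take an $\mathcal{A}$ whose run never reaches $u_A$, paired with a positive trace). What your assumption smuggles in is precisely that $\mathcal{A}$ classifies the trace correctly --- i.e.\ that $\mathcal{A}$ is a solution of the learning task --- which is the content of Theorem~\ref{theorem:first_theorem}, the very statement this lemma exists to support; a proof of the lemma that presupposes it risks circularity when the theorem is derived. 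A related unflagged assumption: you assert the outgoing-edge constraint holds because ``$M(\mathcal{A})$ gives every non-terminal state an edge,'' but a subgoal automaton may have a non-terminal state with only self-loops, in which case the program has \emph{no} answer set and the lemma fails outright. The clean repair is to prove the purely encoding-level statement --- for automata whose non-terminal states each have an external edge, the program has a unique answer set $S$, and $\mathtt{accept} \in S$ iff the halted run of $\mathcal{A}$ reaches $u_A$ at exactly step $n{+}1$ (symmetrically for $\mathtt{reject}$) --- and then obtain items (1) and (2) only under the explicit hypothesis that $\mathcal{A}$ correctly recognizes the trace, stated as a hypothesis rather than treated as a fact. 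Your closing paragraph shows you sensed exactly this pressure point; the proof just needs to own it.
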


\begin{theorem}
	Given an automaton learning task $T_\mathcal{A}=\langle U, \mathcal{O},\mathcal{T}_{L,\mathcal{O}}\rangle$, an automaton $\mathcal{A}$ is a solution of $T_\mathcal{A}$ iff $M(\mathcal{A})$ is an inductive solution of $M(T_\mathcal{A})=\langle B, S_M, E \rangle$.
	\label{theorem:first_theorem}
\end{theorem}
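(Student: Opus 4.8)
The plan is to prove the biconditional by reducing both sides to a common statement about, for each trace, whether the automaton $\mathcal{A}$ accepts it, rejects it, or does neither, and to let Lemma~\ref{lemma:asp_correctness} carry all the work of relating the ASP semantics to the automaton semantics. First I would record a well-formedness observation needed for the forward direction: $M(\mathcal{A}) \subseteq S_M$, so that $M(\mathcal{A})$ is a legitimate candidate hypothesis. This holds by construction, since $M(\mathcal{A})$ consists only of $\mathtt{ed}$ facts and normal rules with $\bar{\delta}$ heads, which are exactly the two kinds of rules permitted in $S_M$; the self-loops of $\mathcal{A}$ contribute no rules because they are recovered by the frame axiom in $B_\delta$, and $u_A, u_R$ contribute none since they have no outgoing transitions.

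Next I would fix an arbitrary trace $T_{L,\mathcal{O}}$ and its associated example $e = \langle e^{\ast}, C_{T_{L,\mathcal{O}}}\rangle \in E$. Since the context $C_{T_{L,\mathcal{O}}}$ coincides with $M(T_{L,\mathcal{O}})$, Lemma~\ref{lemma:asp_correctness} applies to $P := B \cup M(\mathcal{A}) \cup C_{T_{L,\mathcal{O}}}$ and yields a unique answer set $S$ that faithfully reflects the automaton's behaviour on the trace: $\mathtt{accept}\in S$ exactly when $\mathcal{A}$ accepts $T_{L,\mathcal{O}}$, and $\mathtt{reject}\in S$ exactly when $\mathcal{A}$ rejects it. The key leverage from uniqueness is that it collapses the existential in ILASP's notion of acceptance: $B \cup M(\mathcal{A})$ accepts $e$ iff there is an answer set $A$ of $P$ with $e^{inc}\subseteq A$ and $e^{exc}\cap A = \emptyset$, and because $S$ is the \emph{only} such answer set, this reduces to the decidable membership test $e^{inc}\subseteq S$ and $e^{exc}\cap S = \emptyset$.

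Then I would carry out the three-way case analysis on the example type. For a positive trace, $e^{inc}=\{\mathtt{accept}\}$ and $e^{exc}=\{\mathtt{reject}\}$, so $e$ is accepted iff $\mathtt{accept}\in S$ and $\mathtt{reject}\notin S$, which by the lemma means $\mathcal{A}$ accepts and does not reject $T_{L,\mathcal{O}}$; since the deterministic run has a unique final state and $u_A \neq u_R$, acceptance already precludes rejection, so this is equivalent to $\mathcal{A}$ accepting the trace, exactly the requirement placed on positive traces by the definition of a solution of $T_\mathcal{A}$. The negative case is symmetric. For an incomplete trace, $e^{inc}=\emptyset$ and $e^{exc}=\{\mathtt{accept},\mathtt{reject}\}$, so $e$ is accepted iff both atoms are absent from $S$, i.e.\ iff $\mathcal{A}$ neither accepts nor rejects the trace, again matching the solution definition.

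Finally I would assemble the pieces. Since $M(\mathcal{A})$ is an inductive solution iff every example in $E$ is accepted, and the per-trace equivalences above identify accepting the example of a trace with $\mathcal{A}$ meeting the corresponding accept/reject/neither requirement, quantifying over all traces reproduces precisely the three conditions defining a solution of $T_\mathcal{A}$. I expect no deep obstacle once Lemma~\ref{lemma:asp_correctness} is in hand; the only genuinely delicate points are the use of answer-set uniqueness to eliminate the existential quantifier in ILASP's acceptance condition, and the incomplete-trace case, which requires simultaneously ruling out both $\mathtt{accept}$ and $\mathtt{reject}$ rather than just one of them.
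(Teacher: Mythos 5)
The paper never gives a proof of this theorem --- it explicitly says the proofs of Lemma~\ref{lemma:asp_correctness} and Theorem~\ref{theorem:first_theorem} are omitted for brevity --- so there is no official argument to compare yours against; judged on its own, your proof is correct and is the natural completion of the paper's sketch. You reduce the theorem to Lemma~\ref{lemma:asp_correctness} applied trace by trace, and the three points you isolate are exactly the ones that must be made explicit: (i) $M(\mathcal{A}) \subseteq S_M$, so that $M(\mathcal{A})$ is an admissible hypothesis at all (noting that self-loops and transitions out of $u_A, u_R$ contribute no rules, consistent with the restrictions on the hypothesis space); (ii) uniqueness of the answer set, which turns ILASP's existential acceptance condition (``there is an answer set $A$ of $P \cup e^{ctx}$ such that $e^{inc} \subseteq A$ and $e^{exc} \cap A = \emptyset$'') into a membership test on the single answer set $S$ --- without this, the forward and backward directions of the biconditional would not symmetrize so cleanly; and (iii) the three-way case analysis matching $e^+$, $e^-$, $e^I$ against the accept/reject/neither clauses in the definition of a solution of $T_\mathcal{A}$, including the observation that determinism of $\delta$ makes acceptance and rejection mutually exclusive, which is what lets the two-sided condition for $e^+$ (``$\mathtt{accept} \in S$ and $\mathtt{reject} \notin S$'') collapse to the one-sided condition ``$\mathcal{A}$ accepts the trace.''

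One caveat worth flagging. Your argument uses Lemma~\ref{lemma:asp_correctness} in the form ``$\mathtt{accept} \in S$ iff $\mathcal{A}$ accepts the trace, and $\mathtt{reject} \in S$ iff $\mathcal{A}$ rejects it.'' As literally stated, the lemma indexes the trace by a label $\ast \in \{+,-,I\}$ which, everywhere else in the paper, records how the episode terminated in the MDP ($s_n \in S_G$, etc.), not how $\mathcal{A}$ classifies the trace; under that literal reading the lemma is false for an arbitrary automaton (and would trivialize the theorem, since every automaton's encoding would then accept every positive example). Your reading --- $\ast$ denotes the automaton's own classification of the trace --- is the only one under which the lemma is a genuine correctness statement about the encoding, and it is surely what the authors intend, but since your strategy is to ``let the lemma carry all the work,'' you are also implicitly responsible for this reinterpretation. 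Stating it explicitly, rather than in passing, is needed for the proof to be airtight.
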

%
%
%
%

\subsection{Interleaved Automata Learning Algorithm}
This section describes {\methodname} (\methodnamefullhighlight), a method that combines reinforcement and automaton learning. First, we describe the RL algorithm that exploits the automaton structure. Second, we explain how these two learning components are mixed.

\subsubsection{Reinforcement learning algorithm.} The RL algorithm we use to exploit the automata structure is QRM (Q-learning for Reward Machines) \cite{IcarteKVM18}. QRM maintains a Q-function for each automaton state, which are updated with Q-learning updates of the form:
\begin{align*}
Q_u\left(s,a\right) \xleftarrow{\alpha} r+\gamma \max_{a'}Q_{u'}\left(s',a'\right),
\end{align*}
where, in our case, $r=1$ if $u'=u_A$ and 0 otherwise. Note that the bootstrapped action-value depends on the next automaton state $u'$.

QRM performs this update for all the automaton states, so all policies are simultaneously updated  based on the $(s,a,s')$ experience. Note this is a form of intra-option learning \cite{SuttonPS98}: we update the policies of all the states from the experience generated by a single state's policy. In the tabular case, QRM is guaranteed to converge to an optimal policy in the limit. Note that QRM (and thus, {\methodname}) is still applicable in domains with large state spaces by having a Deep Q-Network \cite{MnihKSRVBGRFOPB15} in each automaton state instead of a Q-table.

\begin{algorithm}[ht]
	\caption{{\methodname} algorithm for a single task}
	\label{alg:interleaving_pseudocode}
	\begin{algorithmic}[1]
		\State $\mathcal{A} \leftarrow$ \textsc{InitAutomaton}($\lbrace u_0,u_A,u_R \rbrace$)
		\State $\mathcal{T}_{L,\mathcal{O}} \leftarrow \lbrace\rbrace$ \Comment Set of counterexamples
		\State \textsc{InitQFunctions}($\mathcal{A}$)
		\For{$l=0$ \textbf{to} num\_episodes}
		\State $s \leftarrow$ \textsc{EnvInitialState}()
		\State $u_p \leftarrow \delta(u_0, L(s))$
		\State $T_{L,\mathcal{O}} \leftarrow \langle L(s)\rangle$ \Comment Initialize trace
		\If {\textsc{IsCounterexample}($s, u_p$)}
		\State \textsc{OnCounterexampleFound}($T_{L,\mathcal{O}}$)
		\State $u_p \leftarrow \delta(u_0, L(s))$
		\EndIf
		\For{$t=0$ \textbf{to} length\_episode}
		\State $a, s' \leftarrow$ \textsc{EnvStep}($s, u_p$)
		\State $u_q \leftarrow \delta(u_p,L(s'))$
		\State \textsc{UpdateObsTrace}($L(s'),T_{L,\mathcal{O}}$)
		\If {\textsc{IsCounterexample}($s', u_q$)}
		\State \textsc{OnCounterexampleFound}($T_{L,\mathcal{O}}$)
		\State \algorithmicbreak
		\Else
		\State \textsc{UpdateQFunctions}($s$, $a$, $s'$, $L(s')$)
		\State $s \leftarrow s'; u_p \leftarrow u_q$
		\EndIf
		\EndFor
		\EndFor
		\Function{OnCounterexampleFound}{$T_{L,\mathcal{O}}$}
		\State $\mathcal{T}_{L,\mathcal{O}} \leftarrow \mathcal{T}_{L,\mathcal{O}} \cup \lbrace T_{L,\mathcal{O}} \rbrace$
		\State $\mathcal{A} \leftarrow$ \textsc{FindMinimalAutomaton}($\mathcal{T}_{L,\mathcal{O}}$)
		\State \textsc{InitQFunctions}($\mathcal{A}$)
		\EndFunction
	\end{algorithmic}
\end{algorithm}

\subsubsection{{\methodname} algorithm.} Algorithm~\ref{alg:interleaving_pseudocode} is the {\methodname} pseudocode for a single task and is explained below:
\begin{enumerate}
	\item The initial automaton (line 1) has initial state $u_0$, accepting state $u_A$ and rejecting state $u_R$. The automaton does not accept nor reject anything. The set of counterexample traces and the Q-functions are initialized (lines 2-3).
	\item When an episode starts, the current automaton state $u_p$ is $u_0$. One transition is applied depending on the agent's initial observations $L(s)$ (lines 5-6). In Figure~\ref{fig:officeworld_coffee_rm}, if the agent initially observes $\lbrace\text{\Coffeecup}\rbrace$, the actual initial state is $u_1$.
	\item At each step, we select an action $a$ in state $s$ using an $\epsilon$-greedy policy (line 12), and update the automaton state $u_p$ and observation trace $T_{L,\mathcal{O}}$ (lines 13-14).
	If no counterexample is found (line 18), the Q-functions of all automaton states are updated (line 19) and the episode continues. 
	\item Let $u$ be the current automaton state and $s$ the MDP state. A counterexample trace is found (lines 8, 15) if (a) multiple outgoing transitions from $u$ hold, or (b) the automaton does not correctly recognize $s$ (e.g., $s\in S_G \land u \neq u_A$).
	\item If a counterexample trace $T_{L,\mathcal{O}}$ is found (lines 21-24):
	\begin{enumerate}
		\item Add it to $\mathcal{T}_{L,\mathcal{O}}^+$ if $s\in S_G$, to $\mathcal{T}_{L,\mathcal{O}}^-$ if $s \in S_T \setminus S_G$ and to $\mathcal{T}_{L,\mathcal{O}}^I$ if $s \notin S_T$ (line 22).
		\item Run the automaton learner (line 23), using iterative deepening to select the number of automaton states.
		\begin{itemize}
			\item When a new automaton is learned, we reset all the Q-functions (e.g., setting all Q-values to 0) (line 24).
			\item If a counterexample is detected at the beginning of the episode (line 8), the automaton state is reset (line 10), else the episode ends (line 17).
		\end{itemize}
	\end{enumerate}
\end{enumerate}

{\methodname} does not start learning automata until we find a positive example (i.e., the goal is achieved). Resetting all the Q-functions causes the agent to forget everything it learned. To mitigate the forgetting effect and further exploit the automata structure, we employ reward shaping.

\subsubsection{Reward shaping.} \citeauthor{NgHR99}~(\citeyear{NgHR99}) proposed a function that provides the agent with additional reward to guide its learning process while guaranteeing that optimal policies remain unchanged:
\begin{align*}
F\left(s,a,s'\right)=\gamma\Phi\left(s'\right) - \Phi\left(s\right),
\end{align*}
where $\gamma$ is the MDP's discount factor and $\Phi:S\to\mathbb{R}$ is a real-valued function. The automata structure can be exploited by defining $F:U \times U \to \mathbb{R}$ in terms of the automaton states instead of the MDP states:
\begin{align*}
F\left(u, u'\right)=\gamma\Phi\left(u'\right) - \Phi\left(u\right),
\end{align*}
where $\Phi:U \to \mathbb{R}$. Intuitively, we want $F$'s output to be high when the agent gets closer to $u_A$. Thus, we define $\Phi$ as
\begin{align*}
\Phi\left(u\right) = \left|U\right| - d\left(u, u_A\right),
\end{align*}
where $|U|$ is the number of states in the automaton (an upper bound for the maximum finite distance between $u_0$ and $u_A$), and $d(u,u_A)$ is the distance between state $u$ and $u_A$. If $u_A$ is unreachable from a state $u$, then $d(u,u_A)=\infty$.

Theorem \ref{theorem:finite_steps} shows that if the target automata is in the hypothesis space, there will only be a finite number of learning steps in the algorithm before it converges on the target automata (or an equivalent automata).
\begin{theorem}
	Given a target finite automaton $\mathcal{A_*}$, there is no infinite sequence $\sigma$ of automaton-counterexample pairs $\langle\mathcal{A}_i, e_i\rangle$ such that $\forall i$: (1) $\mathcal{A}_i$ covers all examples $e_1, \ldots, e_{i-1}$, (2) $\mathcal{A}_i$ does not cover $e_i$, and (3) $\mathcal{A}_i$ is in the finite hypothesis space $S_M$.
	\label{theorem:finite_steps}
\end{theorem}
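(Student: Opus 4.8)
The plan is to prove the statement by contradiction, reducing it to a pigeonhole argument over a \emph{finite} collection of candidate automata. The key preliminary observation is that ``$\mathcal{A}$ covers $e$'' is a well-defined Boolean predicate on pairs $(\mathcal{A}, e)$: by Lemma~\ref{lemma:asp_correctness} the program $M(\mathcal{A}) \cup B \cup M(T_{L,\mathcal{O}})$ has a unique answer set for any finite trace, so each automaton either covers a given example or it does not, with no ambiguity. Moreover, by Theorem~\ref{theorem:first_theorem}, reasoning about automata covering traces and about hypotheses $M(\mathcal{A})$ being inductive solutions are interchangeable, so I may argue directly at the level of automata.

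Next I would establish finiteness of the search space. Because the target $\mathcal{A}_*$ lies in the hypothesis space and correctly classifies every trace generated by the environment, it covers every counterexample $e_i$; hence at each iteration the minimal, iteratively deepened learner never needs more states than $|U_*|$, and every $\mathcal{A}_i$ can be taken to have at most $|U_*|$ states. For a fixed bound on the number of states, $S_M$ is finite: the facts $\mathtt{ed(X,Y,E)}$ range over finitely many triples, and each rule with head $\bar{\delta}(\mathtt{X},\mathtt{Y},\mathtt{E},\mathtt{T})$ has a body that is a conjunction of $\mathtt{obs}$ literals over the finite alphabet $\mathcal{O}$, of which there are at most $3^{|\mathcal{O}|}$ essentially distinct consistent choices (each observable appearing positively, negatively, or not at all). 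Consequently there are finitely many hypotheses $H \subseteq S_M$, and therefore finitely many distinct automata; call this number $N$.

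With finiteness in hand, the contradiction is immediate. Suppose an infinite sequence $\sigma = \langle \mathcal{A}_i, e_i\rangle$ satisfying (1)--(3) existed. Among $\mathcal{A}_1, \ldots, \mathcal{A}_{N+1}$ two must coincide by the pigeonhole principle, say $\mathcal{A}_i = \mathcal{A}_j$ with $i < j \le N+1$. Condition (1) applied at index $j$ states that $\mathcal{A}_j$ covers $e_1, \ldots, e_{j-1}$, and since $i \le j-1$ this includes $e_i$; thus $\mathcal{A}_j$ covers $e_i$. But condition (2) at index $i$ states that $\mathcal{A}_i$ does \emph{not} cover $e_i$. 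As $\mathcal{A}_i = \mathcal{A}_j$ and covering is single-valued, this is a contradiction, so no such infinite sequence can exist.

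I expect the only real obstacle to be the finiteness argument rather than the pigeonhole step, which is routine. The delicate point is justifying that the number of automaton states stays bounded across the whole run: this is exactly where membership of $\mathcal{A}_*$ in the hypothesis space is needed, since it guarantees that a consistent (covering) hypothesis of size at most $|U_*|$ always exists, so iterative deepening halts at a bounded state count and never escapes a fixed finite $S_M$. The secondary point to make airtight is that each candidate automaton deterministically either covers or fails an example, which I would cite directly from the uniqueness of the answer set in Lemma~\ref{lemma:asp_correctness}.
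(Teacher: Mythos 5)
Your proof is correct and takes essentially the same route as the paper's: an infinite sequence drawn from a finite hypothesis space forces some automaton to repeat, say $\mathcal{A}_i=\mathcal{A}_j$ with $i<j$, and then condition (2) at index $i$ (it fails to cover $e_i$) contradicts condition (1) at index $j$ (it covers $e_1,\ldots,e_{j-1}\ni e_i$). The extra scaffolding you add --- well-definedness of covering via Lemma~\ref{lemma:asp_correctness} and bounding the state count by $|U_*|$ --- is harmless but unnecessary, since condition (3) already stipulates that every $\mathcal{A}_i$ lies in the finite hypothesis space $S_M$.
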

\begin{proof}
	By contradiction. Assume that $\sigma$ is infinite. Given that $S_M$ is finite, the number of possible automata is finite.  Hence, some automaton $\mathcal{A}$ must appear in $\sigma$ at least twice, say as $\mathcal{A}_i=\mathcal{A}_j, i<j$. By definition, $\mathcal{A}_i$ does not cover $e_i$ and $\mathcal{A}_j$ covers $e_i$. This is a contradiction.
\end{proof}

\begin{figure}[t]
	\centering
	\subfloat[\textsc{VisitABCD}\label{fig:isomorphisms_patrol}]{
		\resizebox{0.35\columnwidth}{!}{
			\begin{tikzpicture}[shorten >=1pt,node distance=2cm,on grid,auto]
			\node[state,initial] (u_0)   {$u_0$};
			\node[state,fill=lightgray] (u_1) [right =of u_0] {$u_1$};
			\node[state,fill=lightgray] (u_2) [below =of u_1] {$u_2$};
			\node[state,fill=lightgray] (u_3) [below =of u_2] {$u_3$};
			\node[state,accepting] (u_A) [below =of u_3] {$u_A$};
			\node[state] (u_R) [left =of u_2] {$u_R$};
			
			\path[->] (u_0) edge node {$A$} (u_1);
			\path[->] (u_1) edge node {$B$} (u_2);
			\path[->] (u_2) edge node {$C$} (u_3);
			\path[->] (u_3) edge node {$D$} (u_A);
			
			\path[->] (u_0) edge node [swap] {$\ast$} (u_R);
			\path[->] (u_1) edge node [swap] {$\ast$} (u_R);
			\path[->] (u_2) edge node [swap] {$\ast$} (u_R);
			\path[->] (u_3) edge node [swap] {$\ast$} (u_R);
			\end{tikzpicture}
		}
	}
	\hfill
	\subfloat[\textsc{CoffeeMail}\label{fig:isomorphisms_coffee_mail}]{
		\centering
		\resizebox{0.55\columnwidth}{!}{
			\begin{tikzpicture}[shorten >=1pt,node distance=2.9cm,on grid,auto]
			\node[state,initial] (u_0)   {$u_0$};
			\node[state,accepting] (u_A) [below =of u_0] {$u_A$};
			\node[state,fill=lightgray,dashed] (u_1) [left =of u_A] {$u_1$};
			\node[state,fill=lightgray,dashed] (u_2) [right =of u_A] {$u_2$};
			\node[state,fill=lightgray] (u_3) [below =of u_A] {$u_3$};
			
			\path[->] (u_0) edge[bend right=20] node [swap] {$\text{\Coffeecup} \land \neg\text{\Letter}$} (u_1);
			\path[->] (u_0) edge[bend left=20] node {$\neg\text{\Coffeecup} \land \text{\Letter}$} (u_2);
			\path[->] (u_0) edge[bend right, below] node[in place,pos=0.6] {$\text{\Coffeecup} \land \text{\Letter} \land \neg o$} (u_3);
			\path[->] (u_0) edge node {$\text{\Coffeecup} \land \text{\Letter} \land o$} (u_A);
			
			\path[->] (u_1) edge[bend right=20] node [swap] {$\text{\Letter} \land \neg o$} (u_3);
			\path[->] (u_1) edge[above] node [swap] {$\text{\Letter} \land o$} (u_A);
			
			\path[->] (u_2) edge[bend left=20] node {$\text{\Coffeecup} \land \neg o$} (u_3);
			\path[->] (u_2) edge[above] node {$\text{\Coffeecup} \land o$} (u_A);
			
			\path[->] (u_3) edge node [swap] {$o$} (u_A);
			\end{tikzpicture}
		}
	}
	\caption{Automata for two \textsc{OfficeWorld} tasks. Self-loops and transitions to $u_R$ in (b) are omitted. The shaded state IDs can be interchanged without $B_{sym}$. The dashed state IDs are still interchangeable even when using $B_{sym}$.
	}
	\label{fig:isomorphishms}
\end{figure}
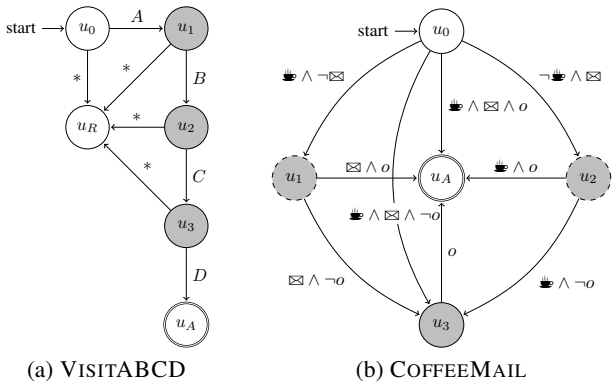

\begin{figure*}[h]
	\subfloat[\textsc{Coffee}]{
		\includegraphics[width=0.32\textwidth]{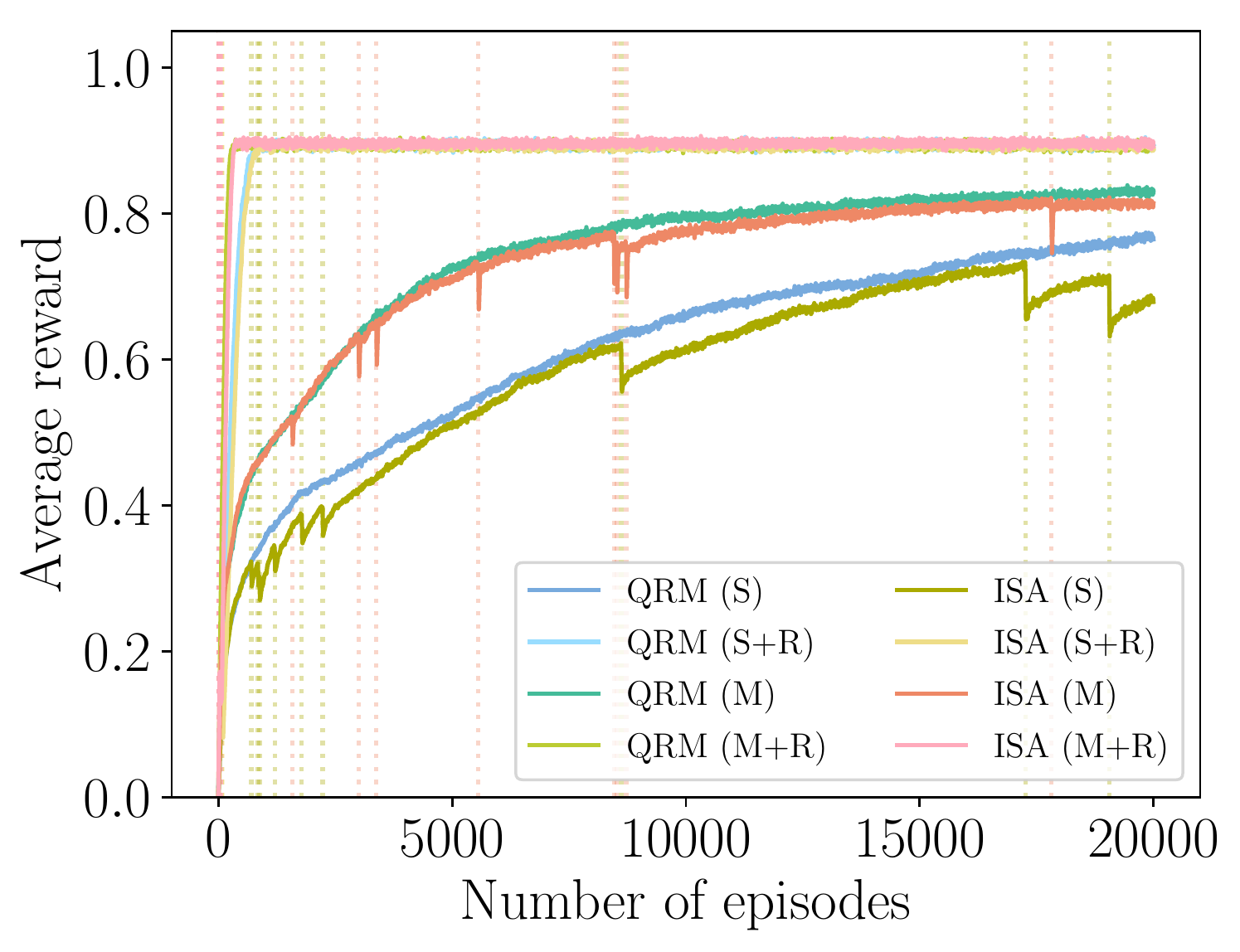}
	}
	\subfloat[\textsc{CoffeeMail}]{
		\includegraphics[width=0.32\textwidth]{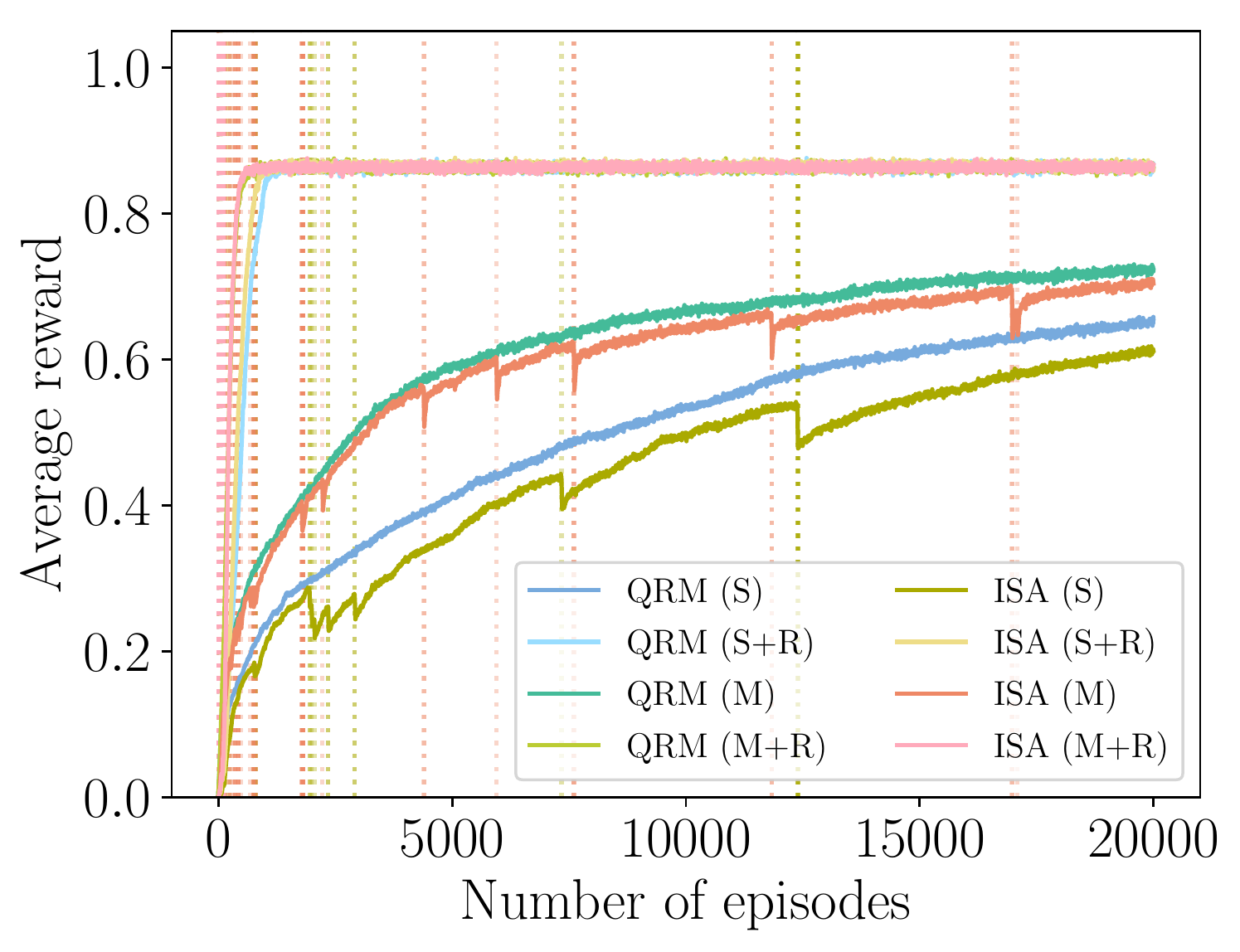}
	}
	\subfloat[\textsc{VisitABCD}]{
		\includegraphics[width=0.32\textwidth]{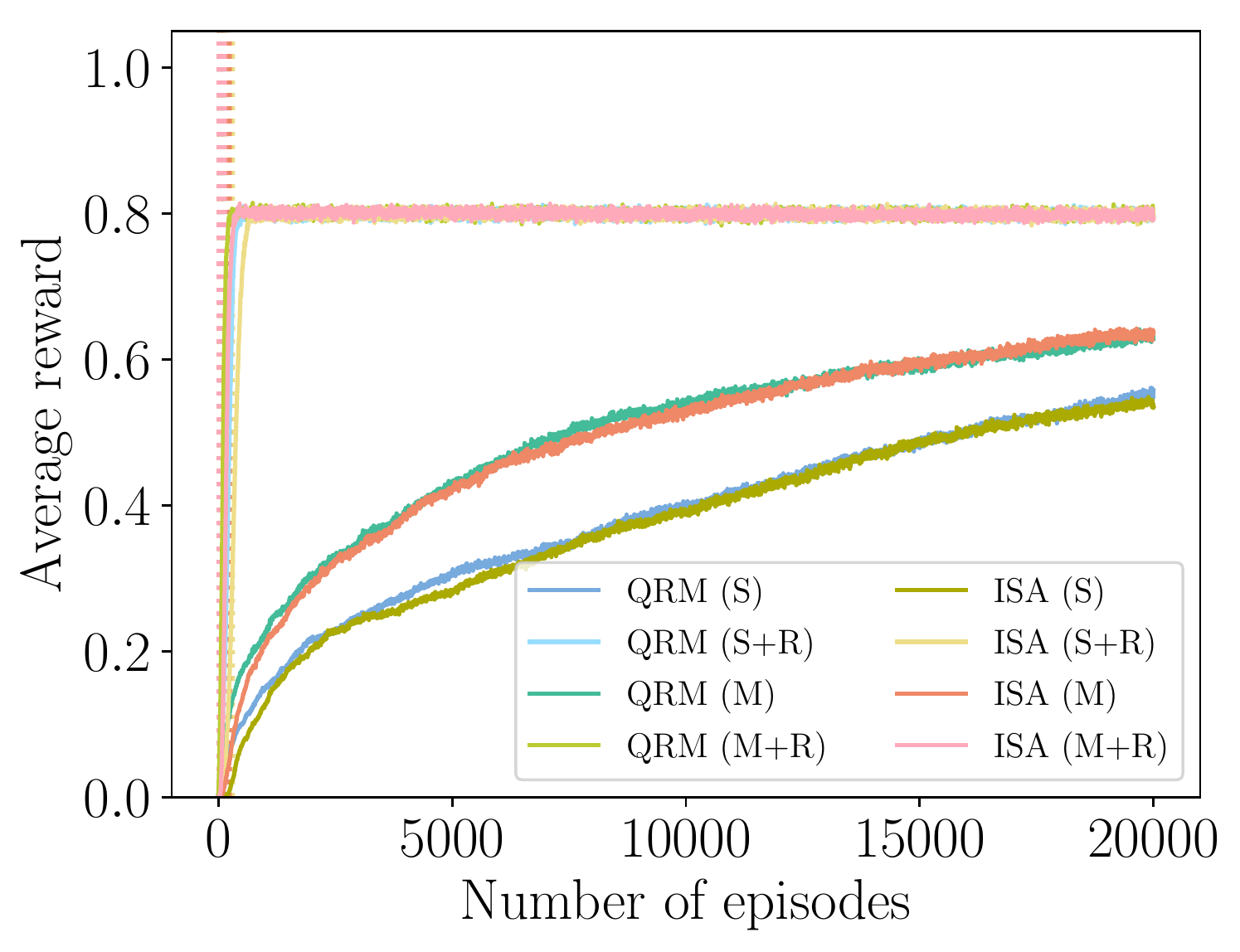}
	}
	\caption{Learning curves for different \textsc{OfficeWorld} tasks. The vertical lines are episodes where an automaton is learned.}
	\label{fig:learning_curves_officeworld}
\end{figure*}

\section{Experimental Results}
\label{sec:results}
We evaluate\footnote{Code: \url{github.com/ertsiger/induction-subgoal-automata-rl}.} {\methodname} using the \textsc{OfficeWorld} domain and the three tasks introduced in Section~\ref{sec:methodology}. The automata we compute using our method are forced to be \emph{acyclic}. Besides, we add a simple \emph{symmetry breaking} constraint $B_{sym}$ to the task's background knowledge to avoid considering isomorphic automata. Figure~\ref{fig:isomorphishms} shows two automata whose state IDs $u_1,u_2$ and $u_3$ can be used indifferently. Our symmetry breaking method (1) assigns an integer index to each state\footnote{$u_0$ has the lowest value, while $u_A$ and $u_R$ have the highest.} and (2) imposes that states must be visited in increasing order of indices. For example, if we assign indices 0\ldots3 to $u_0,\ldots,u_3$, positive traces in Figure~\ref{fig:isomorphisms_patrol} always yield the sequence $\langle u_0, u_1, u_2, u_3, u_A\rangle$. However, this is not enough to break all symmetries in Figure~\ref{fig:isomorphisms_coffee_mail}: $u_1$ and $u_2$ can still be switched since they cannot be in the same path to $u_A$ or $u_R$.

Tabular Q-learning is used to learn the Q-function at each automaton state with parameters $\alpha=0.1$, $\epsilon=0.1$, and $\gamma=0.99$. The agent's state is its position. {\methodname} receives a set of 100 randomly generated grids\footnote{Each grid has the same size and walls as Figure~\ref{fig:officeworld_grid}. The observables and the agent are randomly placed.}. One episode is run per grid in sequential order until reaching 20,000 episodes for each grid. The maximum episode length is 100 steps.
	
For some experiments we consider the multitask setting, where an automaton is learned for every task from the set of grids. Thus, there is a Q-table for each task-grid-automaton state triplet updated at every step. One episode is run per task-grid until 20,000 episodes are performed for each pair.

When reward shaping is on, the shaping function's output is set to -100 in case it is $-\infty$. This occurs when the next automaton state is $u_R$ since there is no path from it to $u_A$.

The different settings are referenced as \textbf{S} (single task), \textbf{M} (multitask) and \textbf{R} (reward shaping), all using the same set of 100 random grids. We execute 10 runs for each setting. 

We use ILASP to learn the automata from compressed observation traces and set $\max |\delta(x,y)|$ to 1. All experiments ran on 3.40GHz Intel\textsuperscript{\textregistered} Core\texttrademark~i7-6700 processors.

\subsubsection{{\methodname} performs similarly to QRM.}
Figure \ref{fig:learning_curves_officeworld} shows the tasks' average learning curve for ISA and QRM (where the automaton is given beforehand) across 10 runs. The {\methodname} curves converge similarly to the analogous QRM's. When reward shaping is used, convergence speeds up dramatically. The multitask settings converge faster since an agent is also trained from other agents' experiences in different tasks.

The vertical lines are episodes where an automaton is learned, and often occur during the first episodes: this is the main reason why the learning and non-learning curves are similar. Less frequently, automata learning also happens at intermediate phases in the \textsc{Coffee} and \textsc{CoffeeMail} tasks which make the agent forget what it learned. In these cases, recovery from forgetting happens faster in the multitask settings because of the reason above. Reward shaping has a positive effect on the learner: not only is convergence faster, it also helps the agent to discover helpful traces earlier.

\subsubsection{ISA's automata learning results.}
Table \ref{tab:ilasp_table_examples_number} shows the average number of examples needed to learn the final automata for setting \textbf{S} (the results for other settings are similar). Table \ref{tab:ilasp_table_time} shows the average time needed for computing \emph{all} the automata, which is negligible with respect to {\methodname}'s total running time. For both tables, the standard error is shown in brackets. First, we observe that the most complex tasks (\textsc{CoffeeMail} and \textsc{VisitABCD}) need a higher number of examples and more time to compute their corresponding automata. However, while the total number of examples for these tasks are similar, the time is higher for \textsc{VisitABCD} possibly because the longest path from $u_0$ to $u_A$ is longer than in \textsc{CoffeeMail}. Second, we see that the number of positive and incomplete examples are usually the smallest and the largest respectively. Note that the number of positive examples is approximately equal to the number of paths from $u_0$ to $u_A$. The paths described by the positive examples are refined through the negative and incomplete ones. Finally, Table \ref{tab:ilasp_table_time} shows slight variations of time across settings. The different experimental settings and the exploratory nature of the agent are responsible for coming up with different counterexamples and, thus, cause these variations. There is not a clear setting which leads to faster automata learning across domains. The design of exploratory strategies that speed up automata learning is possible future work.

\begin{table}[t]
	\centering
	\resizebox{\columnwidth}{!}{%
		\begin{tabular}{lrrrr}
			\toprule
			                    & \multicolumn{1}{c}{All} & \multicolumn{1}{c}{+} & \multicolumn{1}{c}{-} & \multicolumn{1}{c}{I} \\
			\midrule
			\textsc{Coffee}     & 6.6 (0.5)               & 2.2 (0.2)             & 2.3 (0.2)             & 2.1 (0.3)             \\
			\midrule
			\textsc{CoffeeMail} & 34.5 (2.9)              & 5.5 (0.4)             & 9.9 (0.9)             & 19.1 (2.2)            \\
			\midrule
			\textsc{VisitABCD}  & 32.5 (2.1)              & 1.7 (0.2)             & 11.6 (0.8)            & 19.2 (1.7)            \\
			\bottomrule
		\end{tabular}
	}
	\caption{Average number of examples needed for setting \textbf{S}.}
	\label{tab:ilasp_table_examples_number}
\end{table}

\begin{table}[t]
	\centering
	\resizebox{\columnwidth}{!}{%
		\begin{tabular}{lrrrr}
			\toprule
			                    & \multicolumn{1}{c}{\textbf{S}} & \multicolumn{1}{c}{\textbf{S+R}} & \multicolumn{1}{c}{\textbf{M}}& \multicolumn{1}{c}{\textbf{M+R}} \\
			\midrule
			\textsc{Coffee}     & 0.5 (0.0)                      & 0.4 (0.0)                        & 0.3 (0.0)                     & 0.4 (0.0)               \\
			\midrule
			\textsc{CoffeeMail} & 43.3 (12.1)                    & 36.9 (6.0)                       & 24.8 (3.6)                    & 24.6 (2.7)              \\
			\midrule
			\textsc{VisitABCD}  & 63.0 (11.4)                    & 68.5 (13.0)                      & 48.4 (8.8)                    & 69.6 (8.1)              \\
			\bottomrule
		\end{tabular}
	}
	\caption{Average ILASP running time.}
	\label{tab:ilasp_table_time}
\end{table}

\subsubsection{{\methodname} learns automata faster with few observables.} To test the effect of the observable set $\mathcal{O}$ on ILASP's performance, we run the experiments using setting \textbf{S} but employing only the observables that each task needs, e.g., $\mathcal{O}=\{\text{\Coffeecup}, o, \ast\}$ in the \textsc{Coffee} task. The biggest changes occur in \textsc{CoffeeMail}, where the total number of examples is 46\% smaller. The sets of positive, negative and incomplete examples are 27\%, 42\% and 53\% smaller. Besides, the automata are computed 92\% faster. Thus, we see that the number of observables has an impact on the performance: the RL process is halted less frequently and automata are learned faster. This performance boost in \textsc{CoffeeMail} can be due to the fact that it has more paths to $u_A$; thus, irrelevant symbols do not need to be discarded for each of these. This is confirmed by the fact that while the number of positives is roughly the same, the number of incomplete examples greatly decreases.

\section{Related Work}
\label{sec:related_work}

\subsubsection{Hierarchical RL (HRL).} Our method is closely related to the options framework for HRL, and indeed we define one option per automaton state. The key difference from other HRL approaches, like HAMs \cite{ParrR97}, MAXQ \cite{Dietterich00} and the common way of using options, is that we do not learn a high-level policy for selecting among options. Rather, the high-level policy is implicitly represented by the automaton, and the option to execute is fully determined by the current automaton state. Consequently, while HRL policies may be suboptimal in general, the QRM algorithm we use converges to the optimal policy.

Our approach is similar to HAMs in that they also use an automaton. However, HAMs are non-deterministic automata whose transitions can invoke lower level machines and are not labeled by observables (the high-level policy consists in deciding which transition to fire). \citeauthor{LeonettiIP12}~(\citeyear{LeonettiIP12}) synthesize a HAM from the set of shortest solutions to a non-deterministic planning problem, and use it to refine the choices at non-deterministic points through RL.

\subsubsection{Option discovery.} ISA is similar to bottleneck option discovery methods, which find ``bridges'' between regions of the state space. In particular, ISA finds conditions that connect two of these regions. \citeauthor{McGovernB01}~(\citeyear{McGovernB01}) use diverse density to find landmark states in state traces that achieve the task's goal. This approach is similar to ours because (1) it learns from traces; (2) it classifies traces into different categories; and (3) it interleaves option discovery and learning.

Just like some option discovery methods \cite{McGovernB01,StolleP02}, our approach requires the task to be solved at least once. Other methods \cite{MenacheMS02,SimsekB04,SimsekWB05,MachadoBB17} discover options without solving the task.

Grammars are an alternative to automata for expressing formal languages. \citeauthor{LangeF19}~(\citeyear{LangeF19}) induce a straight-line grammar from action traces to discover macro-actions.

\subsubsection{Reward machines (RMs).} Subgoal automata are similar to RMs \cite{IcarteKVM18}. There are two differences: (1) RMs do not have explicit accepting and rejecting states, and (2) RMs use a reward-transition function $\delta_r:U\times U\to \mathbb{R}$ that returns the reward for taking a transition between two automaton states. Note that our automata are a specific case of the latter where $\delta_r(\cdot, u_A) = 1$ and 0 otherwise.

Recent work has focused on learning RMs from experience using discrete optimization \cite{IcarteWKVCM19_NeurIPS} and grammatical inference algorithms \cite{XhuGAMNTW19}. While these approaches can get stuck in a local optima, ISA returns a minimal automaton each time it is called. Just as our method, they use an `a priori' specified set of observables.

In contrast to the other learning approaches, in the future we can leverage ILP to (1) transfer knowledge between automata learning tasks (e.g., providing a partial automaton as the background knowledge), (2) support examples generated by a noisy labeling function, and (3) easily express more complex conditions (e.g., using first-order logic).

\citeauthor{CamachoIKVM19}~(\citeyear{CamachoIKVM19}) convert reward functions expressed in various formal languages into RMs, and propose a reward shaping method that runs value iteration on the RM states.

\section{Conclusions and Future Work}
\label{sec:conclusions}
In this paper we have proposed {\methodname}, an algorithm for learning subgoals by inducing a deterministic finite automaton from observation traces seen by the agent. The automaton structure can be exploited by an existing RL algorithm to increase sample efficiency and transfer learning. We have shown that our method performs comparably to an algorithm where the automaton is given beforehand. 

Improving the scalability is needed to handle more complex tasks requiring automata with cycles or longer examples. In the future, we will further explore symmetry breaking techniques to reduce the hypothesis space \cite{DrescherTW11} and other approaches automata learning, like RNNs \cite{WeissGY18,Michalenko19}. Discovering the observables used by the automata is also an interesting path for future work.

\section*{Acknowledgments}
Anders Jonsson is partially supported by the Spanish grants TIN2015-67959 and PCIN-2017-082.

\fontsize{9pt}{10pt} \selectfont
\bibliographystyle{aaai}
\bibliography{aaai20.bib}

\begin{thebibliography}{}

\bibitem[\protect\citeauthoryear{Bonet, Palacios, and
  Geffner}{2009}]{BonetPG09}
Bonet, B.; Palacios, H.; and Geffner, H.
\newblock 2009.
\newblock {Automatic Derivation of Memoryless Policies and Finite-State
  Controllers Using Classical Planners}.
\newblock In {\em ICAPS}.

\bibitem[\protect\citeauthoryear{Camacho \bgroup et al\mbox.\egroup
  }{2019}]{CamachoIKVM19}
Camacho, A.; Icarte, R.~T.; Klassen, T.~Q.; Valenzano, R.~A.; and McIlraith,
  S.~A.
\newblock 2019.
\newblock {{LTL} and Beyond: Formal Languages for Reward Function Specification
  in Reinforcement Learning}.
\newblock In {\em IJCAI}.

\bibitem[\protect\citeauthoryear{Dietterich}{2000}]{Dietterich00}
Dietterich, T.~G.
\newblock 2000.
\newblock {Hierarchical Reinforcement Learning with the {MAXQ} Value Function
  Decomposition}.
\newblock {\em JAIR} 13:227--303.

\bibitem[\protect\citeauthoryear{Drescher, Tifrea, and
  Walsh}{2011}]{DrescherTW11}
Drescher, C.; Tifrea, O.; and Walsh, T.
\newblock 2011.
\newblock {Symmetry-breaking Answer Set Solving}.
\newblock {\em {AI} Commun.} 24(2):177--194.

\bibitem[\protect\citeauthoryear{Gelfond and Kahl}{2014}]{gelfond_kahl_2014}
Gelfond, M., and Kahl, Y.
\newblock 2014.
\newblock {\em {Knowledge Representation, Reasoning, and the Design of
  Intelligent Agents: The Answer-Set Programming Approach}}.
\newblock Cambridge University Press.

\bibitem[\protect\citeauthoryear{Ho \bgroup et al\mbox.\egroup }{2019}]{Ho2019}
Ho, M.~K.; Abel, D.; Griffiths, T.~L.; and Littman, M.~L.
\newblock 2019.
\newblock The value of abstraction.
\newblock {\em Current Opinion in Behavioral Sciences} 29:111 -- 116.
\newblock SI: 29: Artificial Intelligence (2019).

\bibitem[\protect\citeauthoryear{Hu and {De Giacomo}}{2011}]{HuD11}
Hu, Y., and {De Giacomo}, G.
\newblock 2011.
\newblock {Generalized Planning: Synthesizing Plans that Work for Multiple
  Environments}.
\newblock In {\em IJCAI}.

\bibitem[\protect\citeauthoryear{Konidaris}{2019}]{Konidaris2019}
Konidaris, G.
\newblock 2019.
\newblock On the necessity of abstraction.
\newblock {\em Current Opinion in Behavioral Sciences} 29:1 -- 7.
\newblock SI: 29: Artificial Intelligence (2019).

\bibitem[\protect\citeauthoryear{{Lange} and Faisal}{2019}]{LangeF19}
{Lange}, R.~T., and Faisal, A.
\newblock 2019.
\newblock {Semantic RL with Action Grammars: Data-Efficient Learning of
  Hierarchical Task Abstractions }.
\newblock {\em CoRR} abs/1907.12477.

\bibitem[\protect\citeauthoryear{Law, Russo, and Broda}{2015}]{ILASP_system}
Law, M.; Russo, A.; and Broda, K.
\newblock 2015.
\newblock {T}he {ILASP} {S}ystem for {L}earning {A}nswer {S}et {P}rograms.

\bibitem[\protect\citeauthoryear{Law, Russo, and Broda}{2016}]{LawRB16}
Law, M.; Russo, A.; and Broda, K.
\newblock 2016.
\newblock {Iterative Learning of Answer Set Programs from Context Dependent
  Examples}.
\newblock {\em {TPLP}} 16(5-6):834--848.

\bibitem[\protect\citeauthoryear{Leonetti, Iocchi, and
  Patrizi}{2012}]{LeonettiIP12}
Leonetti, M.; Iocchi, L.; and Patrizi, F.
\newblock 2012.
\newblock {Automatic Generation and Learning of Finite-State Controllers}.
\newblock In {\em AIMSA}.

\bibitem[\protect\citeauthoryear{Machado, Bellemare, and
  Bowling}{2017}]{MachadoBB17}
Machado, M.~C.; Bellemare, M.~G.; and Bowling, M.~H.
\newblock 2017.
\newblock {A Laplacian Framework for Option Discovery in Reinforcement
  Learning}.
\newblock In {\em ICML}.

\bibitem[\protect\citeauthoryear{McGovern and Barto}{2001}]{McGovernB01}
McGovern, A., and Barto, A.~G.
\newblock 2001.
\newblock {Automatic Discovery of Subgoals in Reinforcement Learning using
  Diverse Density}.
\newblock In {\em ICML}.

\bibitem[\protect\citeauthoryear{Menache, Mannor, and
  Shimkin}{2002}]{MenacheMS02}
Menache, I.; Mannor, S.; and Shimkin, N.
\newblock 2002.
\newblock {Q-Cut - Dynamic Discovery of Sub-goals in Reinforcement Learning}.
\newblock In {\em ECML}.

\bibitem[\protect\citeauthoryear{Michalenko \bgroup et al\mbox.\egroup
  }{2019}]{Michalenko19}
Michalenko, J.~J.; Shah, A.; Verma, A.; Baraniuk, R.~G.; Chaudhuri, S.; and
  Patel, A.~B.
\newblock 2019.
\newblock {Representing Formal Languages: {A} Comparison Between Finite
  Automata and Recurrent Neural Networks}.
\newblock {\em CoRR} abs/1902.10297.

\bibitem[\protect\citeauthoryear{Mnih \bgroup et al\mbox.\egroup
  }{2015}]{MnihKSRVBGRFOPB15}
Mnih, V.; Kavukcuoglu, K.; Silver, D.; Rusu, A.~A.; Veness, J.; Bellemare,
  M.~G.; Graves, A.; Riedmiller, M.~A.; Fidjeland, A.; Ostrovski, G.; Petersen,
  S.; Beattie, C.; Sadik, A.; Antonoglou, I.; King, H.; Kumaran, D.; Wierstra,
  D.; Legg, S.; and Hassabis, D.
\newblock 2015.
\newblock Human-level control through deep reinforcement learning.
\newblock {\em Nature} 518(7540):529--533.

\bibitem[\protect\citeauthoryear{Ng, Harada, and Russell}{1999}]{NgHR99}
Ng, A.~Y.; Harada, D.; and Russell, S.~J.
\newblock 1999.
\newblock {Policy Invariance Under Reward Transformations: Theory and
  Application to Reward Shaping}.
\newblock In {\em ICML}.

\bibitem[\protect\citeauthoryear{Parr and Russell}{1997}]{ParrR97}
Parr, R., and Russell, S.~J.
\newblock 1997.
\newblock {Reinforcement Learning with Hierarchies of Machines}.
\newblock In {\em NeurIPS}.

\bibitem[\protect\citeauthoryear{{Segovia-Aguas}, Jim{\'{e}}nez, and
  Jonsson}{2018}]{SegoviaJJ18}
{Segovia-Aguas}, J.; Jim{\'{e}}nez, S.; and Jonsson, A.
\newblock 2018.
\newblock {Computing Hierarchical Finite State Controllers with Classical
  Planning}.
\newblock {\em JAIR} 62:755--797.

\bibitem[\protect\citeauthoryear{Silver \bgroup et al\mbox.\egroup
  }{2018}]{Silver18}
Silver, D.; Hubert, T.; Schrittwieser, J.; Antonoglou, I.; Lai, M.; Guez, A.;
  Lanctot, M.; Sifre, L.; Kumaran, D.; Graepel, T.; Lillicrap, T.; Simonyan,
  K.; and Hassabis, D.
\newblock 2018.
\newblock {A general reinforcement learning algorithm that masters chess,
  shogi, and Go through self-play}.
\newblock {\em Science} 362(6419):1140--1144.

\bibitem[\protect\citeauthoryear{{\c{S}}im{\c{s}}ek and
  Barto}{2004}]{SimsekB04}
{\c{S}}im{\c{s}}ek, {\"{O}}., and Barto, A.~G.
\newblock 2004.
\newblock {Using Relative Novelty to Identify Useful Temporal Abstractions in
  Reinforcement Learning}.
\newblock In {\em ICML}.

\bibitem[\protect\citeauthoryear{{\c{S}}im{\c{s}}ek, Wolfe, and
  Barto}{2005}]{SimsekWB05}
{\c{S}}im{\c{s}}ek, {\"{O}}.; Wolfe, A.~P.; and Barto, A.~G.
\newblock 2005.
\newblock {Identifying Useful Subgoals in Reinforcement Learning by Local Graph
  Partitioning}.
\newblock In {\em ICML}.

\bibitem[\protect\citeauthoryear{Stolle and Precup}{2002}]{StolleP02}
Stolle, M., and Precup, D.
\newblock 2002.
\newblock {Learning Options in Reinforcement Learning}.
\newblock In {\em SARA}.

\bibitem[\protect\citeauthoryear{Sutton and Barto}{1998}]{SuttonB98}
Sutton, R.~S., and Barto, A.~G.
\newblock 1998.
\newblock {\em {Reinforcement Learning: An introduction}}.
\newblock {MIT} Press.

\bibitem[\protect\citeauthoryear{Sutton, Precup, and Singh}{1998}]{SuttonPS98}
Sutton, R.~S.; Precup, D.; and Singh, S.~P.
\newblock 1998.
\newblock {Intra-Option Learning about Temporally Abstract Actions}.
\newblock In {\em ICML}.

\bibitem[\protect\citeauthoryear{Sutton, Precup, and Singh}{1999}]{SuttonPS99}
Sutton, R.~S.; Precup, D.; and Singh, S.~P.
\newblock 1999.
\newblock {Between MDPs and Semi-MDPs: {A} Framework for Temporal Abstraction
  in Reinforcement Learning}.
\newblock {\em Artif. Intell.} 112(1-2):181--211.

\bibitem[\protect\citeauthoryear{{Toro Icarte} \bgroup et al\mbox.\egroup
  }{2018}]{IcarteKVM18}
{Toro Icarte}, R.; Klassen, T.~Q.; Valenzano, R.~A.; and McIlraith, S.~A.
\newblock 2018.
\newblock {Using Reward Machines for High-Level Task Specification and
  Decomposition in Reinforcement Learning}.
\newblock In {\em ICML}.

\bibitem[\protect\citeauthoryear{{Toro Icarte} \bgroup et al\mbox.\egroup
  }{2019}]{IcarteWKVCM19_NeurIPS}
{Toro Icarte}, R.; Waldie, E.; Klassen, T.~Q.; Valenzano, R.; Castro, M.~P.;
  and McIlraith, S.~A.
\newblock 2019.
\newblock {Learning Reward Machines for Partially Observable Reinforcement
  Learning}.
\newblock In {\em NeurIPS}.

\bibitem[\protect\citeauthoryear{Watkins}{1989}]{Watkins:1989}
Watkins, C.
\newblock 1989.
\newblock {\em Learning from Delayed Rewards}.
\newblock Ph.D. Dissertation, King's College, Cambridge, UK.

\bibitem[\protect\citeauthoryear{Weiss, Goldberg, and Yahav}{2018}]{WeissGY18}
Weiss, G.; Goldberg, Y.; and Yahav, E.
\newblock 2018.
\newblock {Extracting Automata from Recurrent Neural Networks Using Queries and
  Counterexamples}.
\newblock In {\em ICML}.

\bibitem[\protect\citeauthoryear{Xu \bgroup et al\mbox.\egroup
  }{2019}]{XhuGAMNTW19}
Xu, Z.; Gavran, I.; Ahmad, Y.; Majumdar, R.; Neider, D.; Topcu, U.; and Wu, B.
\newblock 2019.
\newblock {Joint Inference of Reward Machines and Policies for Reinforcement
  Learning}.
\newblock {\em CoRR} abs/1909.05912.

\end{thebibliography}

\end{document}